\pgfplotsset{grid style={dotted, gray}}
\pgfplotsset{minor grid style={dotted,gray}}
\pgfplotsset{every tick label/.append style={font=\tiny}}
\pgfplotsset{every axis/.append style={font=\small}}
\pgfplotsset{ylabel near ticks}
\pgfplotsset{xlabel near ticks}
\newlength\figureheight 
\newlength\figurewidth
\title{\LARGE \bf
Learning from Demonstration for Hydraulic Manipulators
}
\author{Markku Suomalainen\textsuperscript{1*}, Janne Koivumäki\textsuperscript{2*}, Santeri Lampinen\textsuperscript{2}, Ville Kyrki\textsuperscript{1} and Jouni Mattila\textsuperscript{2}
\thanks{This work was supported by Academy of Finland under the project ``Cooperative heavy-duty hydraulic manipulators for sustainable subsea infrastructure installation and dismantling,'' under Grants 286553 and 286580.}
\thanks{\textsuperscript{1}M.\ Suomalainen and V.\ Kyrki are with School of Electrical Engineering, Aalto University, Finland. {\tt\small name.surname@aalto.fi} \newline
\hspace*{1.1em}\textsuperscript{2}J.\ Koivumäki, S. Lampinen and J.\ Mattila are with Tampere University of Technology, Finland. {\tt\small name.surname@tut.fi} \newline
\hspace*{0.7em} \textsuperscript{*}These authors contributed equally to this paper.
}}
\begin{document}

\maketitle
\thispagestyle{empty}
\pagestyle{empty}

\hyphenation{tele-op-er-a-tion}

\begin{abstract}
This paper presents, for the first time, a method for learning in-contact tasks from a teleoperated demonstration with a hydraulic manipulator. Due to the use of extremely powerful hydraulic manipulator, a force-reflected bilateral teleoperation is the most reasonable method of giving a human demonstration. An advanced subsystem-dynamic-based control design framework, virtual decomposition control (VDC),~is~used to design a stability-guaranteed controller for~the teleoperation system, while taking into account the full nonlinear dynamics of the master and slave manipulators. The use~of~fragile~force/ torque sensor at the tip of the hydraulic slave manipulator is avoided by estimating the contact forces from the manipulator actuators' chamber pressures. In the proposed learning method, it is observed that a surface-sliding tool has a friction-dependent range of directions (between the actual direction of motion and the contact force) from which the manipulator can apply force to produce the sliding motion. By this intuition, an intersection of these ranges can be taken over a motion to robustly find~a desired direction for the motion from one or more demonstrations. The compliant axes required to reproduce the motion can be found by assuming that all motions outside the desired direction is caused by the environment, signalling the need for compliance. Finally, the learning method is incorporated to a novel VDC-based impedance control method to learn compliant behaviour from teleoperated human demonstrations. Experiments with 2-DOF hydraulic manipulator with a 475kg payload demonstrate the suitability and effectiveness of the proposed method to perform learning from demonstration (LfD) with heavy-duty hydraulic manipulators.
\end{abstract}


\section{INTRODUCTION}
\label{intro}

Hydraulic actuation can provide many advantages over their electrical counterparts due to simplicity, robustness, low cost, and large power-to-weight ratio. For these reasons, hydraulically actuated heavy-duty work machines have been used for decades in various harsh-environment and risk-intensive industries such as agriculture, construction, forestry, and mining industries. In the recent years, hydraulic maintenance robots have also been extensively developed to operate heavy objects in harsh environments, such as in the nuclear fusion industry (see \cite{WHMAN} and \cite{Saarinen2011}) containing high radiation, extreme temperatures, and strong magnetic fields. It is already projected that the advent of robotics will revolutionize the (hydraulic) heavy-duty machine industry~\cite{Mattila_TMECH2017}. It is evident that in extreme working conditions where a stable high-bandwidth communication is not guaranteed, new solutions for automated operations are required to extend human capabilities to operate harsh environments safely.

A major challenge in automating heavy-duty hydraulic machinery is that performed tasks often require contact with the environment. As hydraulic manipulators are extremely powerful, the use of position control for in-contact tasks is not advised since a small error in the position would result in large forces applied against the environment, possibly leading to significant damage. This can be prevented by using compliance, i.e., allowing the manipulator to deviate from the planned trajectory in case of physical constraints. Impedance control \cite{Hogan1985}, in which a virtual spring-damper system is programmed to the controller, is a natural method for realizing compliant motions. However, automatic planning of compliant motions is shown to be mathematically infeasible \cite{canny1987new}. Furthermore, designing a stability-guaranteed (i.e., theoretically sound) and high-precision closed-loop control for multiple degrees-of-freedom (\textit{n}-DOF) hydraulic manipulators is a well-known challenge due to their highly nonlinear dynamic behaviour \cite{Mattila_TMECH2017,Sirouspour2001,Koivumaki_TRO2015,Koivumaki_TMECH2017}. It is valid to mention that the control system stability is the primary requirement for all control systems and an unstable system is typically useless and potentially dangerous \cite{Backstepping1995,Slotine1991}. As reviewed in \cite{Mattila_TMECH2017}, nonlinear model-based (NMB) control methods have shown to provide the most advanced control performance for hydraulic manipulators. However, it was only very recently in \cite{Koivumaki_TRO2015,Koivumaki_TMECH2017} where the authors managed to provide for the first time stability-guaranteed NMB control designs for \textit{n}-DOF hydraulic manipulators performing contact tasks.


Learning from Demonstration (LfD) is a well-established paradigm in robotics \cite{argall2009survey}. The idea is that a human teacher shows a demonstration by performing a specific task~and~the robot then learns to perform the same task even in slightly different environments. When dealing with heavy-duty hydraulic manipulators, teleoperation is the most logical choice to provide the demonstration, since this allows the domain experts to demonstrate the task, such as maintenance or earth moving, in a natural way. However, designing a bilateral force-reflected teleoperation for a hydraulic slave manipulator controlled with an electric master manipulator becomes a challenging control design task due to the slave's significant nonlinearities, its dynamic connection to the master, and~the need for arbitrary motion/force scaling between these significantly different manipulators. Moreover, demonstrations from teleoperation are shown to be on average noisier~than kinesthetic teaching \cite{fischer2016comparison}. Finally, learning compliant motions from human demonstrations requires the slave manipulator contact force measurement. Using a conventional force/torque sensor to measure the contact force is infeasible with hydraulic manipulators as these sensors are usually sensitive to shocks and overloading, situations that frequently occur in hydraulic operations.

Common tools for encoding LfD motions have the drawback that the position and force trajectories are coupled, rendering them ill-equipped to function in changing and hazardous environments where errors in position may grow large. We propose using the method presented in \cite{suomalainen2017}, where a task is modelled as a sequence of linear impedance controllers. From position and contact force data recorded in one or more demonstrations by haptic feedback teleoperation, we deduce the direction and stiffness parameters of the controller for performing a single motion. Sequencing the motions to perform a full task is outside of the scope of this paper, but there are existing methods \cite{kroemer2014learning} and ongoing research \cite{hagos2018} on how to learn and perform a whole task, such as excavation or heavy-duty manipulation or assembly.


In this paper, we show that despite the significant challenges, it is possible to realize LfD capabilities~for~hydraulic manipulators using a force-reflected bilateral teleoperation that allows an arbitrary motion and force scaling between~the manipulators. Furthermore, the proposed method takes an advantage of a novel LfD method \cite{suomalainen2017} which is incorporated to a novel impedance control method \cite{Koivumaki_TMECH2017} to learn compliant behaviour from teleoperated human demonstrations. The use of a fragile force sensor at the contact point is avoided by estimating the contact forces from the hydraulic cylinders' chamber pressures \cite{Koivumaki_TRO2015}. The experiments with a full-scale hydraulic slave manipulator (having an attached payload of 475 kg) demonstrate the feasibility of the proposed method. 







\vspace{-0.0cm}
\section{RELATED WORK}
\label{RELATED}
\vspace{-0.1cm}


The current state-of-the-art on control of hydraulic \textit{n}-DOF manipulators is reviewed in \cite{Mattila_TMECH2017}. This study shows~that stability-guaranteed NMB control methods can provide the most advanced control performance for highly nonlinear hydraulic manipulators. The study further shows that~the authors' stability-guaranteed NMB controls demonstrate the state-of-the-art for hydraulic manipulators' free-space motion (see \cite{Koivumaki_AutCon2015}) and constrained motion (see \cite{Koivumaki_TMECH2017,Koivumaki_TRO2015}) controls.

An extensive review on teleoperation in \cite{hokayem2006} shows that teleoperation~is~a~well-established paradigm with electric~systems. However, very few studies exist on teleoperation with hydraulic \textit{n}-DOF manipulators. In \cite{LampinenCASE2018, LampinenFPMC2018}, the authors proposed for the first time a full-dynamics-based force-reflected bilateral teleoperation for hydraulic \textit{n}-DOF manipulators. The previous studies on bilateral teleoperation with hydraulic \textit{n}-DOF manipulators, using linearized system models for master and slave manipulators, can be found in \cite{Salcudean1999,Tafazoli2002}. 

Common methods for encoding LfD skills are Dynamic Movement Primitive (DMP) \cite{schaal2006dynamic} and Hidden Semi-Markov Models \cite{racca2016} , where the trajectory is learned as a set of attractors. Recently, also force profile has been added to DMPs \cite{kramberger2017generalization}. However, the problem of DMPs with force profile is the tight positional coupling between force and position, which in turn means that there must not be a lot of variance between demonstrations or initial contact locations in case of chamfers. It has been shown that giving a peg-in-hole demonstration by teleoperation is more difficult than by kinesthetic teaching \cite{fischer2016comparison}. Therefore, the method for learning the skill in teleoperation must be robust against differences in demonstrations, such as different starting or ending positions. 

Recently, an LfD method for teleoperation was presented by Pervez et al. \cite{pervez2017novel}, who acknowledge the need for robustness when learning from teleoperated demonstrations. They choose one of the demonstrations as a reference trajectory and then extract the applicable data from other incomplete~or jerky demonstrations. In our method \cite{suomalainen2017}, this step is not necessary and demonstrations of different lengths and starting positions are naturally combined. Moreover, Pervez et al. used DMPs which, without the trajectory and force profile coupling, cannot take advantage of chamfers such as in Fig. \ref{fig:assembly} where contact force can appear from various directions, a common case in maintenance tasks. Finally, \textit{to the authors' best knowledge, LfD has previously not been shown to work with hydraulic manipulators for in-contact tasks.}

\begin{figure}[t]
\vspace{0.2cm}
\centering
\includegraphics[width=0.60\columnwidth]{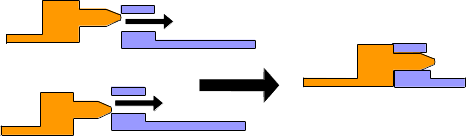}
\vspace{-0.1cm}
\caption{\small{Position alignment by taking advantage of chamfers.}}
\label{fig:assembly}
\vspace{-0.2cm}
\end{figure}

\vspace{-0.1cm}
\section{METHOD}
\label{METHOD}
\vspace{-0.1cm}

The goal is to show that our learning~method in \cite{suomalainen2017}~1)~can be used to learn impedance parameters from a teleoperated human demonstration with a hydraulic heavy-duty manipulator, and 2) can be incorporated to our novel impedance control method in \cite{Koivumaki_TMECH2017} such that the demonstrated motion can be reproduced with guaranteed stability, even in contact with varying environment and presence of measurement errors. Overall, the proposed method is built~on the authors' earlier contributions on high-precision and stability-guaranteed \textit{n}-DOF manipulator controls \cite{Koivumaki_TMECH2017,Koivumaki_AutCon2015,Koivumaki_TRO2015} and on LfD algorithms \cite{suomalainen2016,suomalainen2017}. Furthermore, the solution relies on an advanced force-reflected teleoperation method in \cite{zhu2000}, which is compatible to the hydraulic manipulator control methods in \cite{Koivumaki_TMECH2017,Koivumaki_AutCon2015,Koivumaki_TRO2015}. The proposed method consists of four important parts, i.e, \textit{control system design}, \textit{contact force estimation},  \textit{bilateral teleoperation} and \textit{learning algorithm}.

\vspace{-0.1cm}
\subsection{Control Method}
\label{sec:control}

As discussed, NMB control methods have shown to provide the most advanced control performance for hydraulic manipulators \cite{Mattila_TMECH2017}. The idea in these methods is to design a specific feedforward term (from the system inverse dynamics) to proactively generate the required actuator forces from the required motion dynamics. The feedforward term is designed to be responsible for the major control actions, whereas the feedback terms are used only to overcome~uncertainties, to maintain stability, and to address transition issues. 

Next, Section \ref{sec:VDC_control} describes the basis of the VDC~approach, which is used as an underlying NMB control~design framework for the hydraulic slave manipulator and electric master manipulator. Then, Section \ref{sec:Impedance_control} introduces the impedance control method \cite{Koivumaki_TMECH2017} suitable for the VDC framework. The method to connected the separately controlled master and slave plants for the force-reflected bilateral teleoperation is discussed in Section \ref{sec:teleop}. \textit{The detailed control designs for the hydraulic slave manipulator and electric master manipulator can be found in \cite{Koivumaki_TMECH2017} and \cite{LampinenCASE2018, LampinenFPMC2018}.}

\subsubsection{Virtual Decomposition Control}
\label{sec:VDC_control}

VDC \cite{Zhu_VDC,Zhu1997} is a novel NMB subsystem-dynamics-based control design method, developed for controlling complex robotic systems. The method allows the original system (see Fig. \ref{fig:studied_system}a) to be \textit{virtually decomposed} to modular \textit{subsystems}, objects and open chains (see Fig. \ref{fig:studied_system}b), using conceptual \textit{virtual cutting points} (VCPs). This enables that the control system design and its stability analysis can be performed locally at the subsystem level without imposing additional approximations.

After the virtual decomposition, the system is represented by a simple oriented graph (SOG); see Fig.~\ref{fig:studied_system}c. In the SOG, each subsystem represents a \textit{node}, and each VCP represents a \textit{directed edge}, the direction of which defines the force reference direction. Then, in the control system design, the kinematics of subsystems can be computed by propagating along the direction of the VCP flow in the SOG from the \textit{source node} (object 0) toward the \textit{sink node} (object 2); see Fig. \ref{fig:studied_system}c. Using the kinematics, the dynamics of subsystems can be computed by propagating along the opposite direction of the SOG from the \textit{sink node} toward the \textit{source node}. Finally, the subsystems' control design can be established using the system kinematics and dynamics.

\begin{figure}[t]
			\vspace{0.20cm}
	  \centering
      \includegraphics[width=0.80\columnwidth]{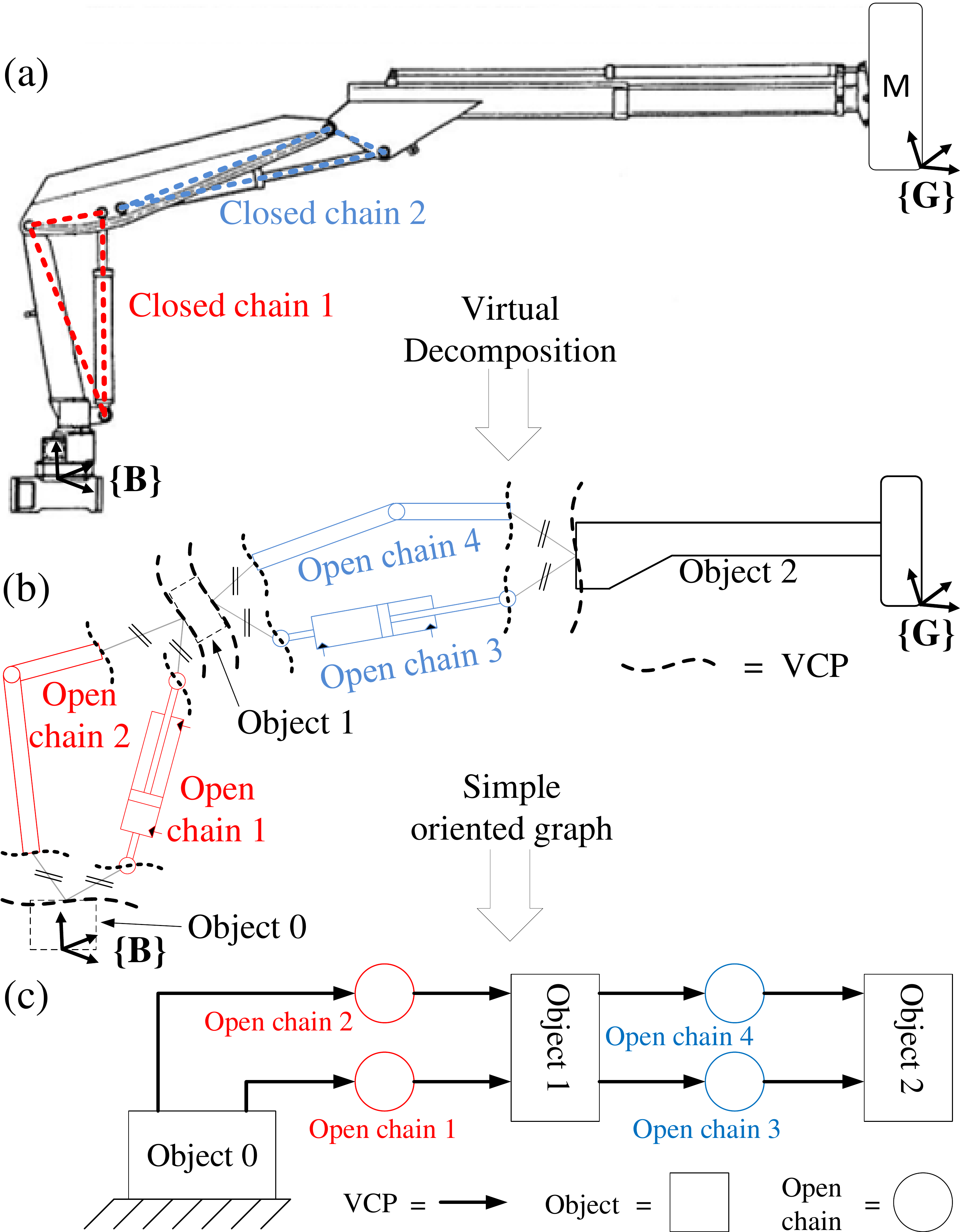}
      \vspace{-0.1cm}
      \caption{\small{(a) Two-DOF hydraulic manipulator. (b) A virtual decomposition of the system. (c) A simple oriented graph of the system.}
      \label{fig:studied_system}}
	  \vspace{-0.6cm}
\end{figure}

The unique control design philosophy of VDC has brought a \textit{modularity} to control system engineering, enabling, e.g., that changing the control (or dynamics) of one subsystem does not affect the control equations of the rest of the system. Furthermore, an adaptive control can be incorporated to the control design to cope with uncertain parameters involved in subsystem dynamics.
With advances of VDC, the state-of-the-art control performances are already demonstrated for hydraulic manipulators in their free-space motion control and in constrained motion control; see \cite{Zhu2005, Mattila_TMECH2017,Koivumaki_TMECH2017,Koivumaki_AutCon2015,Koivumaki_TRO2015}.

\subsubsection{The Proposed Impedance Control}
\label{sec:Impedance_control}

In view of \cite{Hogan1985}, the full expression of the Cartesian target impedance law for manipulators can be described as  
\begin{equation}
		{}^{\mathbf G}{\pmb{F}}_{\rm d} - {}^{\mathbf G}{\pmb{F}} = -\mathbf{M}_{\rm d}(\ddot{\pmb{\chi}}_{\rm d} - \ddot{\pmb{\chi}}) -\mathbf{D}_{\rm d}(\dot{\pmb{\chi}}_{\rm d} - \dot{\pmb{\chi}}) -\mathbf{K}_{\rm d}({\pmb{\chi}}_{\rm d} - {\pmb{\chi}}) \label{EQ_imp1}
\end{equation}
where  ${\pmb{\chi}} \in \mathbb{R}^{n}$ and ${}^{\mathbf G}{\pmb{F}} \in \mathbb{R}^n$ are the Cartesian position and contact force vectors; ${\pmb{\chi}}_{\rm d} \in \mathbb{R}^{n}$ and ${}^{\mathbf G}{\pmb{F}}_{\rm d} \in \mathbb{R}^n$ are the desired Cartesian position and contact force vectors; and $\mathbf{M}_{\rm d} \in \mathbb{R}^{n\times n}$, $\mathbf{D}_{\rm d} \in \mathbb{R}^{n\times n}$ and $\mathbf{K}_{\rm d} \in \mathbb{R}^{n \times n}$ characterize the desired inertia, damping and stiffness, respectively. Then, similar to \cite{Koivumaki_TMECH2017,Semini2015}, neglecting the inertia term in~\eqref{EQ_imp1}, the target impedance can be written as
\begin{equation}
		{}^{\mathbf G}{\pmb{F}}_{\rm d} - {}^{\mathbf G}{\pmb{F}} = -\mathbf{D}_{\rm d}(\dot{\pmb{\chi}}_{\rm d} - \dot{\pmb{\chi}}) -\mathbf{K}_{\rm d}({\pmb{\chi}}_{\rm d} - {\pmb{\chi}}). \label{EQ_imp2}
\end{equation}

The VDC approach is a \textit{velocity-based control method}, which takes care of the system dynamics \cite{Zhu_VDC}. In VDC, a required velocity serves as a reference trajectory for a system and the control objective is to make the controlled actual velocities track the required velocities. The general format of a required velocity includes a desired velocity (which usually serves as a reference trajectory for a system) and one or more terms that are related to control errors~\cite{Zhu_VDC}. The following control law (required velocity vector $\dot{\pmb{\chi}}_{\rm r}$) was proposed in \cite{Koivumaki_TMECH2017} to perform the impedance control within the VDC framework
\begin{equation} \label{EQ_chi_r}
	\dot{\pmb{\chi}}_{\rm r} = \dot{\pmb{\chi}}_{\rm d} + \pmb{\Lambda}_{\chi}(\pmb{\chi}_{\rm d} - \pmb{\chi}) + \pmb{\Lambda}_f({}^{\mathbf G}{\pmb{F}}_{\rm d} - {}^{\mathbf G}{\pmb{F}})
\end{equation}
where  $\pmb{\Lambda}_{\chi} \in \mathbb{R}^{n\times n}$ and $\pmb{\Lambda}_f \in \mathbb{R}^{n\times n}$ are two positive-definite matrices characterizing Cartesian position and force control and they should be defined according to Condition~\ref{cond:gain2}.

\newtheorem{cond}{Condition}     
\begin{cond} \label{cond:gain2}
Matrices $\pmb{\Lambda}_{f}$ and $\pmb{\Lambda}_{\chi}$ should be defined as
	\begin{align}
		\pmb{\Lambda}_{f} = \mathbf{D}^{-1}_{\rm d},\hspace{0.3cm} \pmb{\Lambda}_{\chi} = \mathbf{D}^{-1}_{\rm d}\mathbf{K}_{\rm d} \nonumber
	\end{align}
such that both $\pmb{\Lambda}_{f}$ and $\pmb{\Lambda}_{\chi}$ qualify as positive-definite.
\end{cond}
Then, the following Theorem~\ref{thm:impedance} provides that the target impedance behaviour~\eqref{EQ_imp2} can be achieved.
\newtheorem{theorem}{Theorem}
\begin{theorem} \label{thm:impedance}
Consider the proposed control law \eqref{EQ_chi_r}, which defines the required velocity behaviour for the system. If $\pmb{\Lambda}_{f}$ and $\pmb{\Lambda}_{\chi}$ in \eqref{EQ_chi_r} are defined according to Condition~\ref{cond:gain2}, then the control law \eqref{EQ_chi_r} equals the target impedance law \eqref{EQ_imp2}.            
\end{theorem}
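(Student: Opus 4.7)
My plan is to show that the proposed control law (3) is algebraically equivalent to the target impedance law (2) once Condition~\ref{cond:gain2} is imposed and the VDC tracking property is invoked. The key observation is that VDC is a velocity-based framework in which the actual Cartesian velocity $\dot{\pmb{\chi}}$ is driven to follow the required velocity $\dot{\pmb{\chi}}_{\rm r}$, so for the purpose of matching the target impedance we may treat $\dot{\pmb{\chi}}_{\rm r} = \dot{\pmb{\chi}}$ at the Cartesian level. Everything else is straightforward substitution and rearrangement.

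First I would substitute the definitions $\pmb{\Lambda}_{f} = \mathbf{D}_{\rm d}^{-1}$ and $\pmb{\Lambda}_{\chi} = \mathbf{D}_{\rm d}^{-1}\mathbf{K}_{\rm d}$ from Condition~\ref{cond:gain2} directly into~\eqref{EQ_chi_r}, obtaining
\begin{equation*}
    \dot{\pmb{\chi}}_{\rm r} = \dot{\pmb{\chi}}_{\rm d} + \mathbf{D}_{\rm d}^{-1}\mathbf{K}_{\rm d}(\pmb{\chi}_{\rm d} - \pmb{\chi}) + \mathbf{D}_{\rm d}^{-1}({}^{\mathbf G}{\pmb{F}}_{\rm d} - {}^{\mathbf G}{\pmb{F}}).
\end{equation*}
Next I would replace $\dot{\pmb{\chi}}_{\rm r}$ by $\dot{\pmb{\chi}}$, citing the VDC tracking property discussed just before the theorem (``the control objective is to make the controlled actual velocities track the required velocities''). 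Moving $\dot{\pmb{\chi}}_{\rm d}$ to the left, left-multiplying by $\mathbf{D}_{\rm d}$ (which is invertible because both $\pmb{\Lambda}_f$ and $\pmb{\Lambda}_\chi$ are required to be positive-definite, forcing $\mathbf{D}_{\rm d}$ to be positive-definite as well), and isolating ${}^{\mathbf G}{\pmb{F}}_{\rm d} - {}^{\mathbf G}{\pmb{F}}$ then gives exactly the right-hand side of~\eqref{EQ_imp2}, with the sign flip $\dot{\pmb{\chi}} - \dot{\pmb{\chi}}_{\rm d} = -(\dot{\pmb{\chi}}_{\rm d} - \dot{\pmb{\chi}})$ and $\pmb{\chi}_{\rm d}-\pmb{\chi}$ appearing correctly.

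I would also spend a sentence verifying that Condition~\ref{cond:gain2} is self-consistent: since $\mathbf{D}_{\rm d}$ and $\mathbf{K}_{\rm d}$ are desired damping and stiffness matrices chosen positive-definite in~\eqref{EQ_imp1}, the inverse $\mathbf{D}_{\rm d}^{-1}$ is positive-definite and the product $\mathbf{D}_{\rm d}^{-1}\mathbf{K}_{\rm d}$ is similar to the symmetric positive-definite matrix $\mathbf{D}_{\rm d}^{-1/2}\mathbf{K}_{\rm d}\mathbf{D}_{\rm d}^{-1/2}$, hence itself qualifies as positive-definite (as stated in the condition). This closes the equivalence in both directions: the substitution works and the resulting gains are admissible.

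The only subtle point, and the one I would flag as the conceptual crux rather than an algebraic obstacle, is the replacement $\dot{\pmb{\chi}}_{\rm r} = \dot{\pmb{\chi}}$. Strictly speaking this equality is not instantaneous; it holds in the asymptotic or tracking sense guaranteed by the inner VDC controller designed in \cite{Koivumaki_TMECH2017}. The proof therefore rests on the earlier VDC stability results, not on any new dynamic argument, and the theorem is best read as stating that \emph{once the inner velocity loop converges}, the outer law~\eqref{EQ_chi_r} under Condition~\ref{cond:gain2} reproduces~\eqref{EQ_imp2}. A complete write-up would note this, cite the tracking guarantee from \cite{Koivumaki_TMECH2017}, and then conclude.
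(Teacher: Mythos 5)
Your proof is correct, and its central computation---substituting Condition~\ref{cond:gain2} into \eqref{EQ_chi_r}, setting $\dot{\pmb{\chi}}_{\rm r}=\dot{\pmb{\chi}}$, left-multiplying by $\mathbf{D}_{\rm d}$ and isolating ${}^{\mathbf G}{\pmb{F}}_{\rm d}-{}^{\mathbf G}{\pmb{F}}$---is exactly the rearrangement the paper performs in its second chain of equalities \eqref{EQ_App4}. Where you differ is in how the key identification $\dot{\pmb{\chi}}_{\rm r}=\dot{\pmb{\chi}}$ is obtained. You import it from outside the theorem, as the asymptotic tracking property of the inner VDC velocity loop (citing \cite{Koivumaki_TMECH2017}), and you correctly flag that this makes the statement hold only in the sense ``once the inner loop converges.'' The paper instead establishes the same identity \emph{algebraically}: it substitutes the target impedance relation \eqref{EQ_imp2} into the force term of \eqref{EQ_chi_r} and shows that, under Condition~\ref{cond:gain2}, all terms cancel so that $\dot{\pmb{\chi}}_{\rm r}=\dot{\pmb{\chi}}$ (equation \eqref{EQ_App3}); combined with the rearrangement you both do, this exhibits a purely static, two-way equivalence between \eqref{EQ_chi_r} and \eqref{EQ_imp2} (given the control law, the target impedance holds if and only if the required velocity coincides with the actual one), with no appeal to stability or convergence of the inner controller. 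So your version buys a clearer physical reading---it makes explicit that in practice the impedance behaviour is realized through velocity tracking---at the cost of resting on an external dynamic result, whereas the paper's version is self-contained and also delivers the converse direction ($\dot{\pmb{\chi}}_{\rm r}=\dot{\pmb{\chi}}$ follows from the impedance law) that your write-up leaves implicit. Your side remark on the positive-definiteness of $\mathbf{D}_{\rm d}^{-1}\mathbf{K}_{\rm d}$ via similarity to $\mathbf{D}_{\rm d}^{-1/2}\mathbf{K}_{\rm d}\mathbf{D}_{\rm d}^{-1/2}$ is fine but not needed for the theorem itself, since Condition~\ref{cond:gain2} simply requires the gains to qualify as positive-definite.
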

\begin{proof}
See Appendix~\ref{proof:impedance}.
\end{proof}


\subsection{Contact Force Estimation}
\label{sec:force_estimation}

A contact force control requires a force feedback. However, the use of conventional six-DOF force/moment sensor (built using either straingauge technology or optics) at a manipulator tip is not practical with extremely powerful hydraulic manipulators. Thus, alternative methods for contact force measuring (estimation) are highly favorable.   

A force-sensorless contact force control method was developed in \cite{Koivumaki_TRO2015} by the authors. In the method, the contact forces were estimated from chamber pressures of the manipulator's hydraulic actuators by using an accurate system modeling and gravity compensation; see \cite{Koivumaki_TRO2015}. However, estimating contact forces from the chamber pressures is challenging due to the nonlinear dynamic behaviour of hydraulic manipulators. Furthermore, in the estimation method in \cite{Koivumaki_TRO2015}, the system inertia and piston friction were not considered in the contact force estimation. Thus, the greater the manipulator velocity, the more error in the contact force estimates. To improve the contact force estimation accuracy, e.g., the method presented in \cite{Vihonen2016} can be used to address the system inertia through accurate estimation of manipulators' link accelerations.

Eventually, inaccuracies will always exist in the contact force estimation with hydraulic manipulators. This~is~due~to the systems complexity involved with non-smooth and discontinuous nonlinearities, and model and parameter uncertainties. Thus, in this study, \textit{a major emphasis is not paid~to the contact force estimation accuracy, as the learning method should be robust for the force estimation errors}.

\subsection{Teleoperation}
\label{sec:teleop}

Teleoperation can extend human capabilities to hard-to-reach, hazardous or dangerous environments. In addition,~it can be used to scale control actions to both micro and macro environments, e.g., to surgery or control of an excavator. Furthermore, when dealing with heavy and powerful hydraulic manipulators, kinesthetic teaching is impossible to perform, and teleoperation is the remaining option. With force-reflected bilateral teleoperation, the operator is able to \textbf{1}) send (control) the desired motions to a slave manipulator with a master manipulator and \textbf{2}) physically feel forces acting at the slave manipulator.

The designed  force-reflected bilateral teleoperation system is identical with the system presented in \cite{LampinenCASE2018, LampinenFPMC2018}. Individual NMB controllers are designed for both master and slave manipulator according to VDC design principles defined in section \ref{sec:VDC_control}. Then, the two separately controlled plants~are connected with each other using the following equations~\cite{zhu2000}
%
\begin{align}
\mathbf{v}_{sr} &= \kappa_p\mathbf{\tilde{v}}_m
+ \mathbf{\Lambda}\big(\,\kappa_p\mathbf{\tilde{p}}_m - \mathbf{p}_s\: \big)
- \; \mathbf{A} \, \big(\,\mathbf{\tilde{f}}_s + {\kappa}_f
\mathbf{\tilde{f}}_m\big)
\label{eq:teleoperationVsd}\\
\mathbf{v}_{mr} &= \frac{1}{\kappa_p}\mathbf{\tilde{v}}_s \,
+ \mathbf{\Lambda}\big(\frac{1}{\kappa_p}\mathbf{\tilde{p}}_s \,- \mathbf{p}_m\big) 
- \frac{\mathbf{A}}{\kappa_p}\big(\,\mathbf{\tilde{f}}_s + {\kappa}_f
\mathbf{\tilde{f}}_m\big)\hspace{-0.5cm}
\label{eq:teleoperationVmd}
\end{align}
where $ \mathbf{\Lambda} \in \mathbb{R}^{n \times n}$ is a diagonal positive-definite matrix defining position feedback gain, $ \mathbf{A} \in \mathbb{R}^{n \times n} $ is a positive-definite matrix defining force feedback gain, and $  \mathbf{p}_m $ and  $ \mathbf{p}_s $ denote the position/orientation of the master and slave manipulator, respectively, subject to $  \dot{\mathbf{p}}_m = \mathbf{v}_m $ and  $ \dot{\mathbf{p}}_s = \mathbf{v}_s $. In \eqref{eq:teleoperationVsd} and \eqref{eq:teleoperationVmd}, tilde $\mathbf{\tilde{\cdot}}$ denotes that the variable is obtained with a first order low-pass filter in \cite{zhu2000}, $ \mathbf{\tilde{f}}_m \mbox{ and } \mathbf{\tilde{f}}_s $ are the filtered contact forces from the master/slave manipulator toward the operator/environment. Scaling factors ${\kappa_p}$~and~${\kappa_f}$ are used to scale the position and force of the master manipulator, respectively. \textit{The scaling factors allow arbitrary motion and force scaling between the manipulators \cite{Zhu_VDC}}.


The asymptotic position/velocity tracking between the master and slave manipulators can be guaranteed in free-space and constrained motions by following the control design principles in \cite{Zhu_VDC,zhu2000}. Finally, in \cite{Salcudean1999,Tafazoli2002}, linearized system models were used in hydraulic manipulator's bilateral teleoperation, whereas \textit{our method takes the full nonlinear dynamics of the master and slave manipulators into account}.

\subsection{Learning}
\label{sec:learning}
\vspace{-0.2cm}

To reproduce a motion with linear dynamics, the parameters $\pmb{\chi}_{\rm d}$ and $\pmb{\Lambda}_{\chi}$ in \eqref{EQ_chi_r} must be deduced from the demonstration data. Since an NMB controller requires the trajectory to be differentiable, we compute $\pmb{\chi}_{\rm d}$ as a quintic path \cite{Jazar2010}, providing a smooth trajectories for position, velocity and acceleration. The trajectory is created from starting position $\pmb{\chi_1}$ and ending position $\pmb{\chi_n}$, where $\pmb{\chi_n}=\pmb{\chi_1}+\delta \pmb{\hat{v}}_{\rm d}^*$ with $\delta$ defining the length of the trajectory and $\pmb{\hat{v}}_{\rm d}^*$ the desired direction learned from demonstrations. $\pmb{\Lambda}_{\chi}$ we compute by Condition \ref{cond:gain2} from the traditional impedance control stiffness matrix $\mathbf{K}_{\rm d}$. Now, a controller with correctly learned parameters $\pmb{\hat{v}}_{\rm d}^*$ and $\mathbf{K}_{\rm d}$ can reproduce motions such as in Fig. \ref{fig:assembly} where the same controller can perform both depicted motions which result in the completed assembly. First, we present how to learn $\pmb{\hat{v}}_{\rm d}^*$ and, then, $\mathbf{K}_{\rm d}$ in the Cartesian space.

\subsubsection{Learning desired direction}
\label{sec:method_desireddir}
The intuition for learning the desired direction $\pmb{\hat{v}}_{\rm d}^*$ stems from geometry: to slide the robot's end-effector along a surface, there is always a friction-dependent sector $s$ of directions from which the robot can apply a force to accomplish the sliding. If this sector is calculated at intervals over a whole demonstration, the intersection of all sectors $s_i$ would signify a direction which can lead the end-effector through the whole demonstrated motion either in free space or in contact. We call sector $s$ a set of desired directions and it is visualized for a single time-instant in 2-D in Fig.  \ref{fig:sliding_forces}. From Fig. \ref{fig:sliding_forces} we also see that the force estimated in Section \ref{sec:force_estimation} consists of
 \begin{equation}
  {}^{\mathbf G}{\pmb{F}} = -\pmb{F_N} - \pmb{F_{\mu}}
  \label{eqt:measured_force}
\end{equation}
where $\pmb{F_{\mu}}=\vert\mu\pmb{F_N}\rvert\left( -{\pmb{\hat{v}_a}}\right)$ is the force caused by Coulomb friction with $\mu$ being the friction coefficient, $\pmb{\hat{v}_a}$ the actual direction of motion and $\pmb{F_N}$ the normal force. As explained in Section \ref{sec:force_estimation}, we ignore the acceleration from \eqref{eqt:measured_force} and build the algorithm robust enough to withstand the error. 
 Throughout this paper, we will use the circumflex (\^{}) notation to denote the normalization of a vector. From \eqref{eqt:measured_force} and Fig. \ref{fig:sliding_forces} we can see that $s$ is between the direction of contact force ${}^{\mathbf G}{\pmb{F}}$ and the actual direction of motion $\pmb{\hat{v}_a}$. 

\begin{figure}[t]
\vspace{0.2cm}
	\centering
	\includegraphics[width=.70\columnwidth]{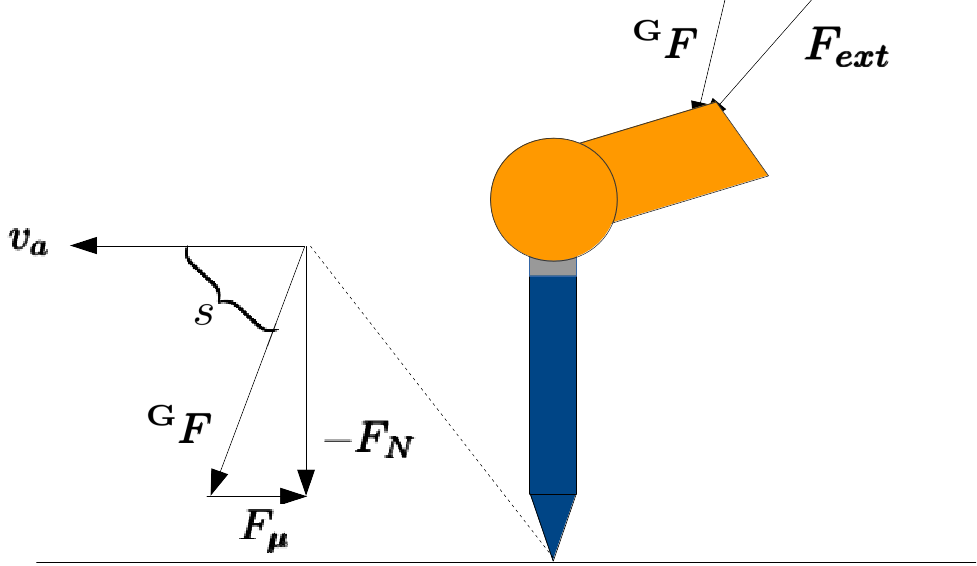}
	\vspace{-0.1cm}
	\caption{\small{Illustration of the forces acting on the end-effector during a sliding motion. ${}^{\mathbf G}{\pmb{F}}$ is the contact force estimated from chamber pressures, $\pmb{F}_N$ the normal force, $\pmb{F}_{\mu}$ the friction force, $\pmb{F}_{ext}$ the external force applied during a demonstration, $\pmb{v}_a$ the actual direction of motion and $s$ the sector of desired directions.}
	\label{fig:sliding_forces}}
\vspace{-0.4cm}
\end{figure} 

\begin{figure}[t]
\vspace{0.2cm}
\centering
\includegraphics[width=.38\columnwidth]{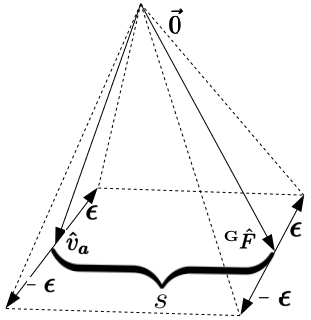}
\caption{\small{Illustration of expanding 2-D sector $s$ into 3-D set of directions $P$ in (\ref{eqt:epsilon}) and (\ref{eqt:polyhedron}). Continuous lines represent the vectors and dotted lines highlight the pyramid shape. }\label{fig:single_pyramid}}
\vspace{-0.0cm}
\end{figure}

As a human cannot give a perfect demonstration, sector $s$ must be extended perpendicularly to allow the calculation of an intersection from real demonstrations in 3-D. We compute the vector $\epsilon$ which extends the sector $s$ perpendicularly by $\alpha$ degrees
\begin{equation}
\pmb{\epsilon} = \tan \alpha \frac{{}^{\mathbf G}{\pmb{\hat{F}}} \times \pmb{\hat{v}_{a}} }{\lvert ^{\mathbf G}{\pmb{\hat{F}}} \times \pmb{\hat{v}_{a}} \rvert }.
\label{eqt:epsilon}
\end{equation}
Now, by adding $\epsilon$ and $-\epsilon$ to both $\pmb{\hat{F}}$ and $\pmb{\hat{v}_{a}}$, we can construct polyhedron $P$ defining the set of desired directions at each time instant. This is computed in (\ref{eqt:polyhedron}) and visualized in Fig.~\ref{fig:single_pyramid}. 
\begin{equation}
P_k=
\begin{bmatrix}
\pmb{\hat{v}_{a}}+\pmb{\epsilon} &
\pmb{\hat{v}_{a}}-\pmb{\epsilon} &
^{\mathbf G}{\pmb{\hat{F}}}-\pmb{\epsilon} &
^{\mathbf G}{\pmb{\hat{F}}}+\pmb{\epsilon}
\end{bmatrix}^T.
\label{eqt:polyhedron}
\end{equation}
Essentially, $\pmb{\hat{v}}_{\rm d}^*$ for the whole motion is found by projecting each polyhedron $P_k$ into a 2-D polygon, calculating the intersection of the polygons with outlier rejection and, finally, projecting a chosen point from the intersection back into a 3-D vector, which is the final chosen $\pmb{\hat{v}}_{\rm d}^*$. More details about this process, such as the exact projection algorithms, can be found from \cite{suomalainen2017}. If there are more than one demonstrations, the force and position data are concatenated and the intersection is calculated over all the demonstrations, naturally combining multiple demonstrations.

\subsubsection{Finding the compliant axes}
\label{sec:method_compliant}

\begin{figure}[t]
\centering
\begin{subfigure}[b]{0.48\columnwidth} 
	\centering
	\includegraphics[width=0.74\columnwidth]{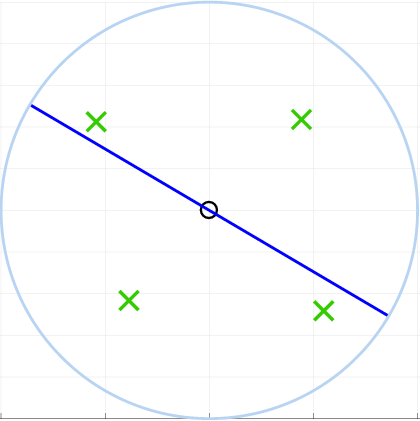}
	\caption{2 compliant axes}
	\label{fig:2_compl}
\end{subfigure}
\begin{subfigure}[b]{0.48\columnwidth} 
	\centering
	\includegraphics[width=0.74\columnwidth]{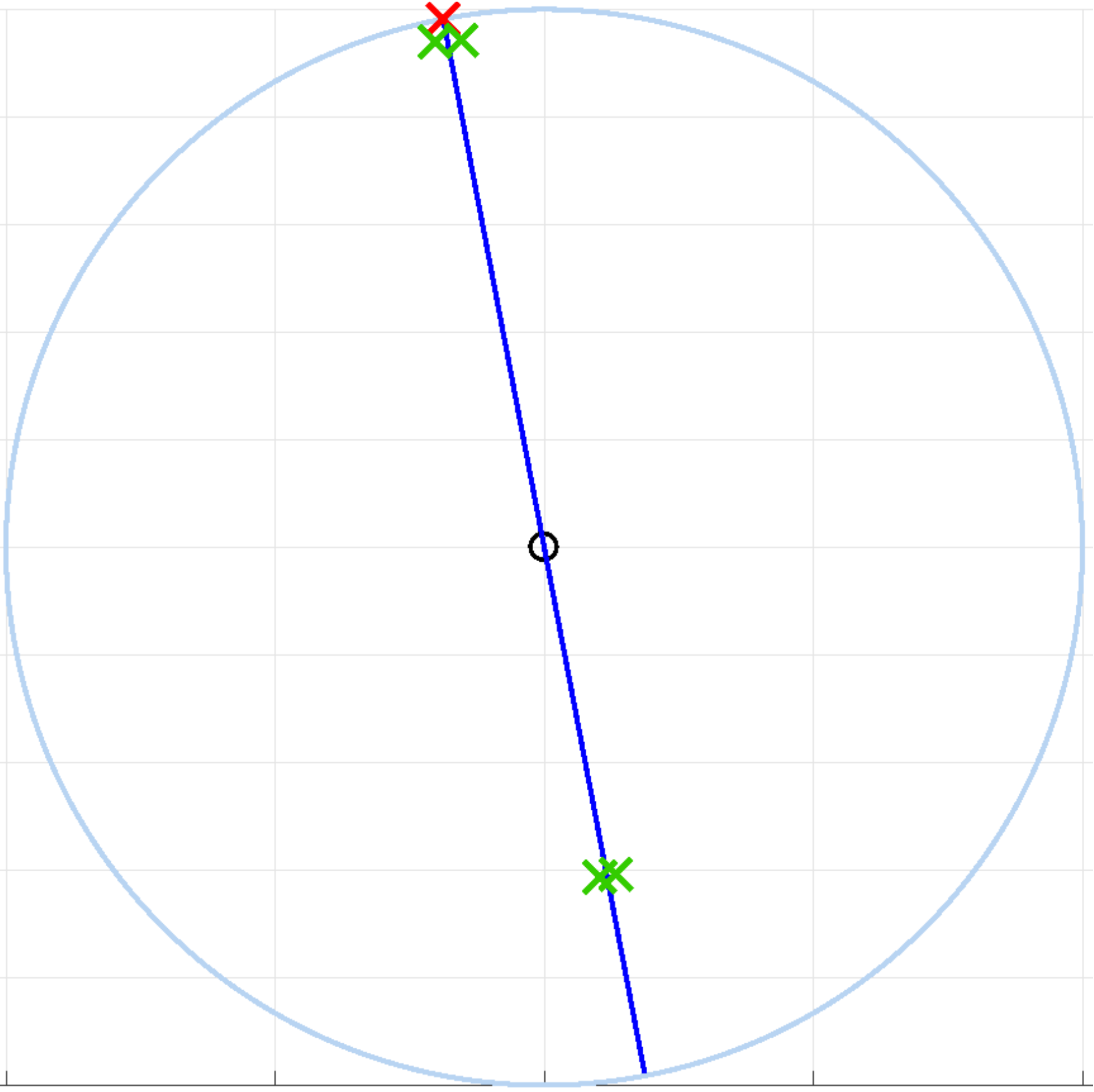}
	\caption{1 compliant axis}
	\label{fig:1_compl}
\end{subfigure}
\caption{\small{Unit spheres in the coordinate system for determining the compliant axes. Origin (black circle) represents $\pmb{\hat{v}}_{\rm d}^*$,~green~crosses are the corresponding $\pmb{\hat{v}_{a}}$ of each demonstration, blue line visualizes the model for 1 compliant axis and red cross the identified compliant direction when the model with 1 compliant axis is chosen.}}
\label{fig:comp_axes}
\vspace{-0.2cm}
\end{figure}

After finding the desired direction, we need to find the compliant axes to construct $\mathbf{K}_{\rm d}$ such that the task can be reproduced. Our key idea is to assume that if motion is observed in other directions besides the desired direction $\pmb{\hat{v}}_{\rm d}^*$, this motion is caused by the environment and therefore compliance is required. We assume that if compliance is required in a direction, the stiffness in that direction should be zero, i.e., the corresponding axis is fully compliant. This leads to the requirement that the compliant axes need to be perpendicular to $\pmb{\hat{v}}_{\rm d}^*$, since no motion can be commanded along a direction where stiffness is zero. This requirement restricts our search into directions perpendicular to $\pmb{\hat{v}}_{\rm d}^*$, which when projected into 2-D and normalized form a unit sphere visualized in Fig. \ref{fig:comp_axes}. From this unit sphere we find the number of compliant axes. Zero compliant axes are detected when there were no deviations from the desired direction, i.e., all the observed motion $\pmb{\hat{v}_{a}}$ was along $\pmb{\hat{v}}_{\rm d}^*$, signalling free space motion. In such a case, the green crosses in Fig. \ref{fig:comp_axes} would be on top of the black circle. When $\pmb{\hat{v}_{a}}$ deviates from $\pmb{\hat{v}}_{\rm d}^*$ along a single line in this coordinate system, there has been deviation from $\pmb{\hat{v}}_{\rm d}^*$ along a single axis and, therefore, one compliant axis is required, as in Fig. \ref{fig:1_compl}. The direction of this axis can be obtained by projecting the intersection of the axis and unit sphere back to 3-D. If the deviation from $\pmb{\hat{v}}_{\rm d}^*$ is not along a single line, both axes perpendicular to $\pmb{\hat{v}}_{\rm d}^*$ must be compliant, as in Fig. \ref{fig:2_compl}. The whole process is shown in Algorithm \ref{alg:compliant}. Again, we refer the reader to \cite{suomalainen2017} for a more detailed explanation.

\begin{algorithm}[b]
\caption{Finding the required number of compliant axes and their directions.}
\label{alg:compliant}
\begin{algorithmic}[1]
\small{
\State {Set of mean actual directions of each $i$ demonstration $\pmb{\overline{v}_a}$}
\State{Rotate and project $\pmb{\overline{v}_a}$ to 2-D such that the origin represents $\pmb{\hat{v}}_{\rm d}^*$}
\For{$d=0:2$ axes of compliance}
\For{$\pmb{v_i}$ in $\pmb{\overline{v}_a}$}
\State{$\pmb{\epsilon_{di}} = \left( I-U_d \right)  \pmb{v_i}$}
\State{where $U_d=$ rank $d$ PCA approximation of $\pmb{\overline{v}_a}$}
\EndFor
\State{$L_d = \prod \limits_{i} \mathcal{N}\left(\pmb{\epsilon_{di}}\vert \pmb{0},\Sigma\right)$}
\State{Calculate $BIC_d$ with (\ref{eqt:bic})}
\EndFor
\State{$D = \arg \min_d BIC_d$} 
\If{$D=1$}
\State{Project $U_d$ back to 3-D} 
\EndIf
}
\end{algorithmic}
\end{algorithm}

To compute numerical values for different number of compliant axes, we calculate the likelihoods for each case on rows 3-8 in Algorithm \ref{alg:compliant}. This is done by utilizing Principal Component Analysis (PCA) approximations of different ranks. Rank 0 approximation corresponds to 0 matrix, and therefore the likelihood is based on the distance from origin. Rank 1 approximation corresponds to a 1st degree polynomial fit in $\pmb{\overline{v}_a}$, and rank 2 corresponds to maximum likelihood since it perfectly explains the data. In this case, the direction of compliance is not required from $U_2$, since the whole plane perpendicular to $\pmb{\hat{v}}_{\rm d}^*$ must be compliant.

Finally, we wish to encourage the use of simpler models. We take inspiration from Bayesian Information Criterion (BIC) \cite{schwarz1978estimating}, which is defined
\begin{equation}
BIC = \ln(n)k-2\ln(L)
\label{eqt:bic}
\end{equation}
where $n$ is the number of data points, $k$ the number of parameters and $L$ the likelihood of a model. We choose the model with the lowest BIC value as the number of required compliant axes on row 8 of Algorithm \ref{alg:compliant}, and in case of 1 compliant axis also the direction. The details of writing the stiffness matrix $\mathbf{K}_{\rm d}$ can be found in \cite{suomalainen2016}. We note that this is not the typical use of BIC where $n\gg k$ and variance in the likelihood is calculated from the data, but we assume that the uncertainty of demonstrations can be estimated beforehand.

\section{EXPERIMENTS AND RESULTS}
\label{EXPERIMENTS}

\begin{figure}[b]
			\vspace{-0.2cm}
      \centering
      \makebox{\parbox{\columnwidth}{\centering \includegraphics[width=\columnwidth]{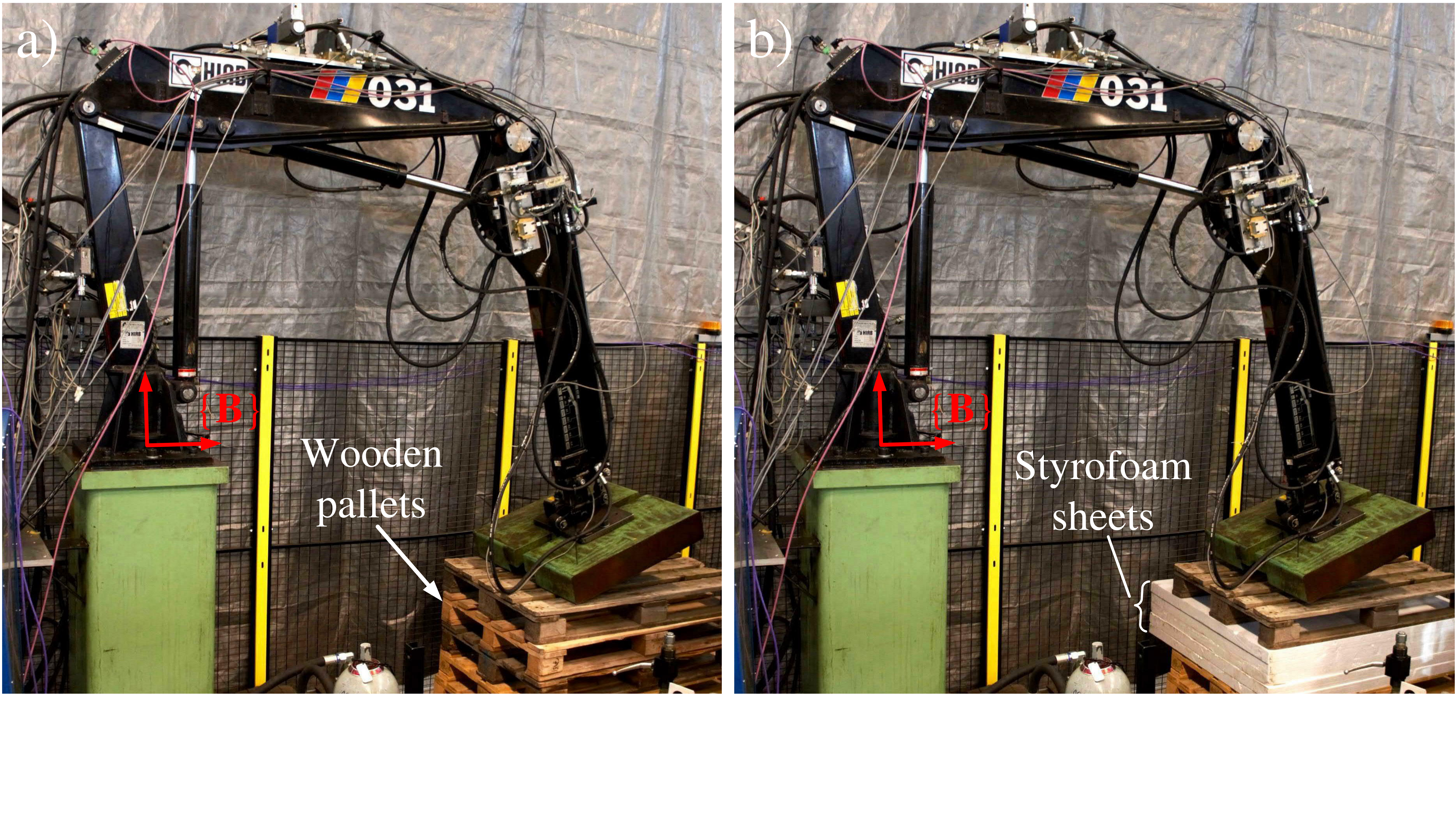}
      \vspace{-0.6cm}
			\caption{\small{a) Experiment setup with wooden pallets. b) Experiment setup with styrofoam sheets and wooden pallets. The manipulator’s position in the figures show the starting point of the test trajectories (same starting position in the both cases).}}
      \label{fig:setup}}}
\end{figure}

This section demonstrates the suitability and effectiveness of the proposed method to perform LfD with hydraulic~manipulators. First, Section \ref{sec:results_learning} demonstrates the learning method by recording data from the motion of sliding along a stack of wooden pallets (see Fig. \ref{fig:setup}a), from which we gathered four demonstrations via the teleoperation method described in Section \ref{sec:teleop}. Then, Section \ref{sec:results_reprod} shows that with the parameters learned from the demonstrations we can reproduce the motion in the wooden pallet environment (in Fig. \ref{fig:setup}a) but also in another environment, where the styrofoam sheets were used (see Fig. \ref{fig:setup}b) to change the properties of the original environment. \textit{In the below sections, the Cartesian position/force data refers the data in relation to the system base frame} \{\textbf{B}\}; see Fig. \ref{fig:setup}.

The two-DOF hydraulic manipulator (in~Fig.~\ref{fig:setup}) has the maximum reach of approximately 3.2 m and the payload of 475 kg is attached to its tip. Phantom Premium 3.0 haptic device is used as the electric master manipulator in the realized bilateral teleoperation system. For the real-time control system implementation, the following components were used: DS1005 processor board, DS3001 incremental encoder board, DS2103 DAC board, DS2003 ADC board, and DS4504 100 Mb/s ethernet interface. The remaining of the hardware implementations can be found in \cite{Koivumaki_TRO2015} or \cite{Koivumaki_TMECH2017}.


\subsection{Learning}
\label{sec:results_learning}

\begin{figure}[t]
      \centering
      \makebox{\parbox{\columnwidth}{\centering \includegraphics[width=\columnwidth]{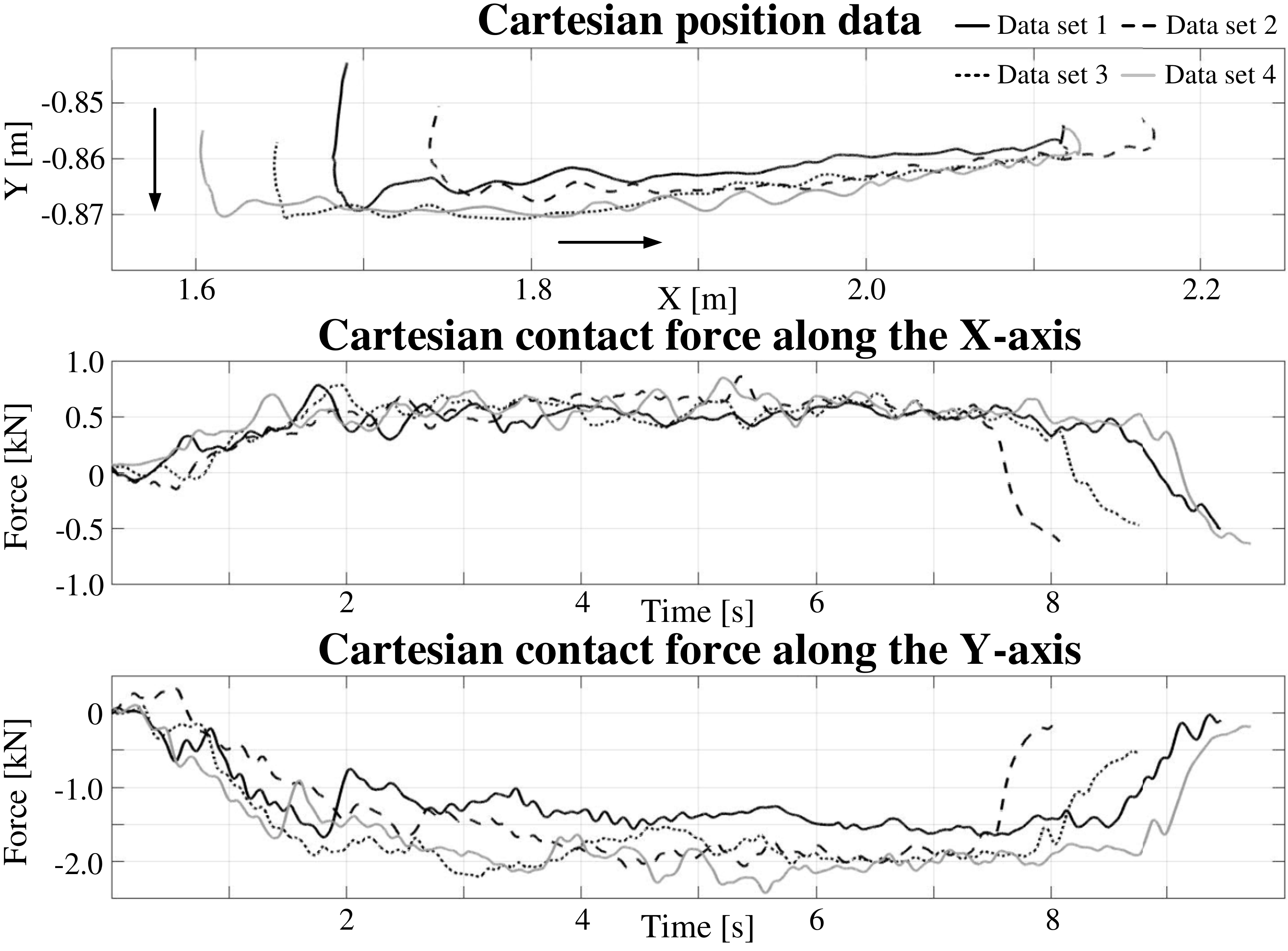}
      \vspace{-0.6cm}
			\caption{\small{The learning data. The learning data was gathered by using the wooden pallets in Fig. \ref{fig:setup}a as the contact environment.}}
      \label{fig:learned_data}}}
			\vspace{-0.2cm}
\end{figure}

\begin{figure}[b]
			\vspace{-0.4cm}
      \centering
      \makebox{\parbox{\columnwidth}{\centering \includegraphics[width=0.55\columnwidth]{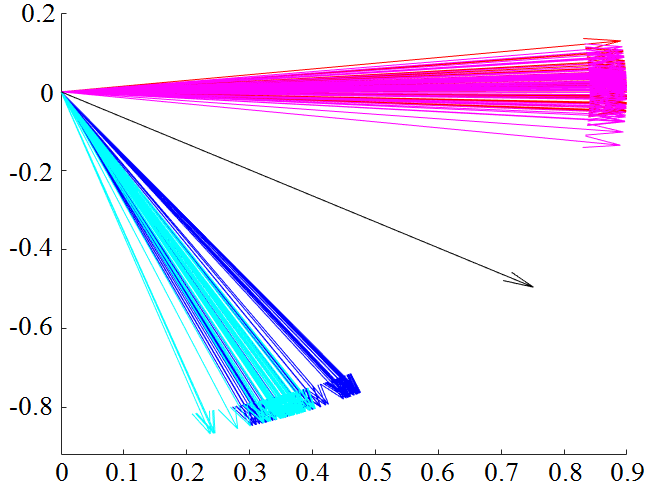}
      \vspace{-0.2cm}
			\caption{\small{2-D plot of the limits for desired directions from demonstrations 1 and 2. Directions of motion (demo 1 red, demo 2 magenta), interaction forces (blue, cyan) and the desired direction (black). }
      \label{fig:desired}}}}
\end{figure}

Our goal is to validate that the desired direction $\pmb{\hat{v}}_{\rm d}^*$, computed with the algorithm described in Section \ref{sec:method_desireddir}, can reproduce the demonstrated motion. Since the slave manipulator of the experiments is two-DOF, $\pmb{\hat{v}}_{\rm d}^*$ is a 2-D vector. Thus, we can assume there is no deviation perpendicular to $s$, and adding $\epsilon$ in (\ref{eqt:polyhedron}) is not required. Otherwise, the computation is performed similarly as it would be for a 3-D vector (with 0 values for z-axis).

The position and force data from the four demonstrations of sliding the manipulator tip along a wooden pallet (see Fig. \ref{fig:setup}a) are plotted in Fig. \ref{fig:learned_data}. The original measurement frequency was 500Hz, which we averaged to 25Hz. Fig. \ref{fig:desired} visualizes in 2-D the directions of motion and the directions of estimated forces in 25Hz (i.e. the edges of $P$ from Fig. \ref{fig:single_pyramid} in 2-D without $\epsilon$) from demonstrations 1 and 2, starting the moment when contact force was detected. As expected, in this simple setup the desired direction is detected approximately in the middle of these limitations, corresponding to the center of intersection between sectors $s$ from Fig. \ref{fig:sliding_forces}. $\pmb{\hat{v}}_{\rm d}^*$ computed from demonstrations 3 and 4 produced a very similar result. The expected results are achieved despite the difficulties in the force measuring explained in Section \ref{sec:force_estimation}, showing the robustness of the algorithm.

In addition, we wanted to validate that the method can find the number of compliant axes which, together with the desired direction, can reproduce the demonstrated motion. In 2-D, this corresponds to choosing between 0 and 1 compliant axis. Fig. \ref{fig:comp_results} visualizes steps from the process of choosing the compliant axes. Fig. \ref{fig:unit} is in the same coordinate system as Fig. \ref{fig:comp_axes}, with the actual direction of motion from demonstrations 1 and 2 plotted in green crosses: in this simple~setup, the directions of motion between the demonstrations are close enough to each other that the two green crosses are overlapping but not near the origin, signalling the need for one compliant axis.

To properly validate the choice of 1 compliant axis, we computed the BIC values with all 2-demonstration pairs from the four available demonstrations. The results are in Fig. \ref{fig:dofplot}. Due to the similarity of demonstrations, the values are again overlapping heavily, but it can be seen that the difference between 0 and 1 compliant axes is clear. As required, the direction of this axis is perpendicular to $\pmb{\hat{v}}_{\rm d}^*$ in the 2-D plane. 

\begin{figure}[t]
\vspace{0.5cm}
\centering
\begin{subfigure}[b]{0.48\columnwidth} 
	\centering
	\includegraphics[width=0.70\columnwidth]{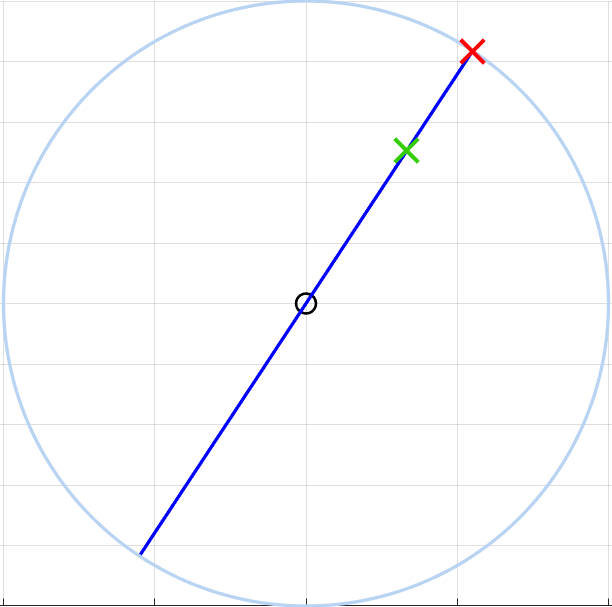}
	\caption{Illustration of the BIC computation}
	\label{fig:unit}
\end{subfigure}
\begin{subfigure}[b]{0.48\columnwidth} 
	\centering
	\includegraphics[width=0.85\columnwidth]{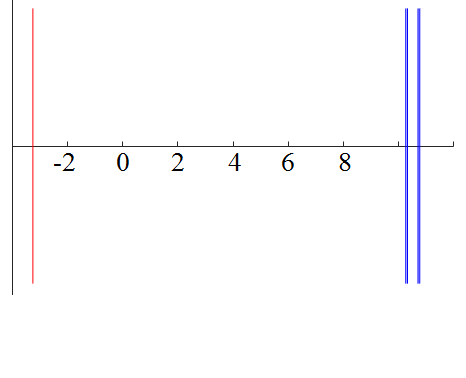}
	\caption{BIC values}
	\label{fig:dofplot}
\end{subfigure}
\vspace{0.1cm}
\caption{\small{(a) Average directions of motion of demonstrations 1 and 2 (green crosses overlapping each other) in the same coordinate system as in Fig. \ref{fig:comp_axes}: (b) BIC values of 0 (blue) and 1 (red) compliant axis based on likelihoods from Algorithm \ref{alg:compliant}. All 6 two-demonstration combinations of the 4 recorded demonstrations are plotted, and due to the similar values are overlapping heavily. The model with smaller BIC value is chosen.}}
\label{fig:comp_results}
\vspace{-0.2cm}
\end{figure}

Due to the inaccuracies in the manipulator's inverse kinematics, a small stiffness value has been put on the compliant axis (the stiffness value of $1/10$ on the axis corresponding~$\pmb{\hat{v}}_{\rm d}^*$). Consequently, $\mathbf{K}_{\rm d} = k_{\rm stiff}[0.75\ 0.40; 0.40\ 0.35]$ is obtained from the learning data using the proposed learning algorithm, and the value of $k_{\rm stiff}=4\times10^4 \frac{{\rm N}}{{\rm m}}$ is used in the experiments. Furthermore, $\mathbf{D}_{\rm d} = {\rm diag}(2.0, 2.4)\times10^3\frac{{\rm N}}{{\rm m/s}}$ is used for the desired damping (in line with the damping along the compliant axis used in \cite{Koivumaki_TMECH2017}). Finally, using the above $\mathbf{K}_{\rm d}$, $\mathbf{D}_{\rm d}$, and Condition~\ref{cond:gain2}, the learned control matrix $\pmb{\Lambda}_{\chi}$ and control matrix $\pmb{\Lambda}_{f}$ can be written as
\begin{equation}
	\pmb{\Lambda}_{\chi} = \begin{bmatrix} 15.00  & 8.06 \\ 6.72 & 5.83 \end{bmatrix}\frac{{\rm m/s}}{{\rm m}},\   \pmb{\Lambda}_{f} = \begin{bmatrix} 5.0  & 0 \\ 0 & 4.17 \end{bmatrix}10^{-3}\frac{{\rm m/s}}{{\rm N}}. \nonumber
\end{equation} 
Note that $\pmb{\Lambda}_{\chi}$ and $\pmb{\Lambda}_{f}$ qualify as positive-definite matrices, satisfying Condition \ref{cond:gain2}. \textit{This verifies that the proposed~learning algorithm provides a theoretically sound method~to~compute the impedance control matrix} $\pmb{\Lambda}_{\chi}$ \textit{from the learned} $\mathbf{K}_{\rm d}$.


\subsection{Reproduction of motion}
\label{sec:results_reprod}


To verify that the taught motions (in Fig. \ref{fig:learned_data}) can be reproduced with the learned parameters, \textit{two test cases} (\textit{TC}s) with different environment stiffness (shown in Fig. \ref{fig:setup}) were performed. \textit{In TC 1}, the environment was the same wooden pallet environment as was used in the learning (see Fig. \ref{fig:setup}a). \textit{In TC 2}, two wooden pallets were replaced with a pile of styrofoam sheets (see Fig. \ref{fig:setup}b) making the environment more compliant in relation to TC 1.  

Fig. \ref{fig:experiment_data} shows the manipulator behavior with the learned parameters in both TCs. In this figure, the data for the experiment with the wooden pallets (see Fig. \ref{fig:setup}a) is given in black and the data for the experiment with the wooden pallets and styrofoam sheets (see Fig. \ref{fig:setup}b) is given in grey.  

The first plot in Fig. \ref{fig:experiment_data} shows the Cartesian position~profile (in X-Y space) in both TCs. The starting position of the manipulator in both TCs is shown with a circle (and~can~be seen also in Fig. \ref{fig:setup}). As the plot demonstrates, in both TCs the taught motion can be reproduced with the learned parameters; see relation to Fig. \ref{fig:learned_data}. When contact to the environment was established, it was retained all the time during the motion in both TCs. Furthermore, in TC 2, the manipulator penetrates deeper along the Y-axis (in relation~to TC 1) due to more compliant environment.~This~was anticipated.

The second and third plots in Fig. \ref{fig:experiment_data} show the (estimated) Cartesian forces along the X- and Y-axes, respectively, in both TCs. As the plots show, almost identical force behaviors were identified in both TCs despite different compliance in the test environments. Furthermore, the force levels along the X- and Y-axes in Fig. \ref{fig:experiment_data} correspond well in relation to the average force levels during the teaching (see Fig. \ref{fig:learned_data}). 

In view of the results in Fig. \ref{fig:experiment_data}, \textit{it can be concluded that the proposed method provides efficient tools for LfD with heavy-duty hydraulic manipulators, despite all challenges discussed in Sections \ref{intro} and \ref{METHOD}.}            

\begin{figure}[t]
			\vspace{0.2cm}
      \centering
      \makebox{\parbox{\columnwidth}{\centering \includegraphics[width=\columnwidth]{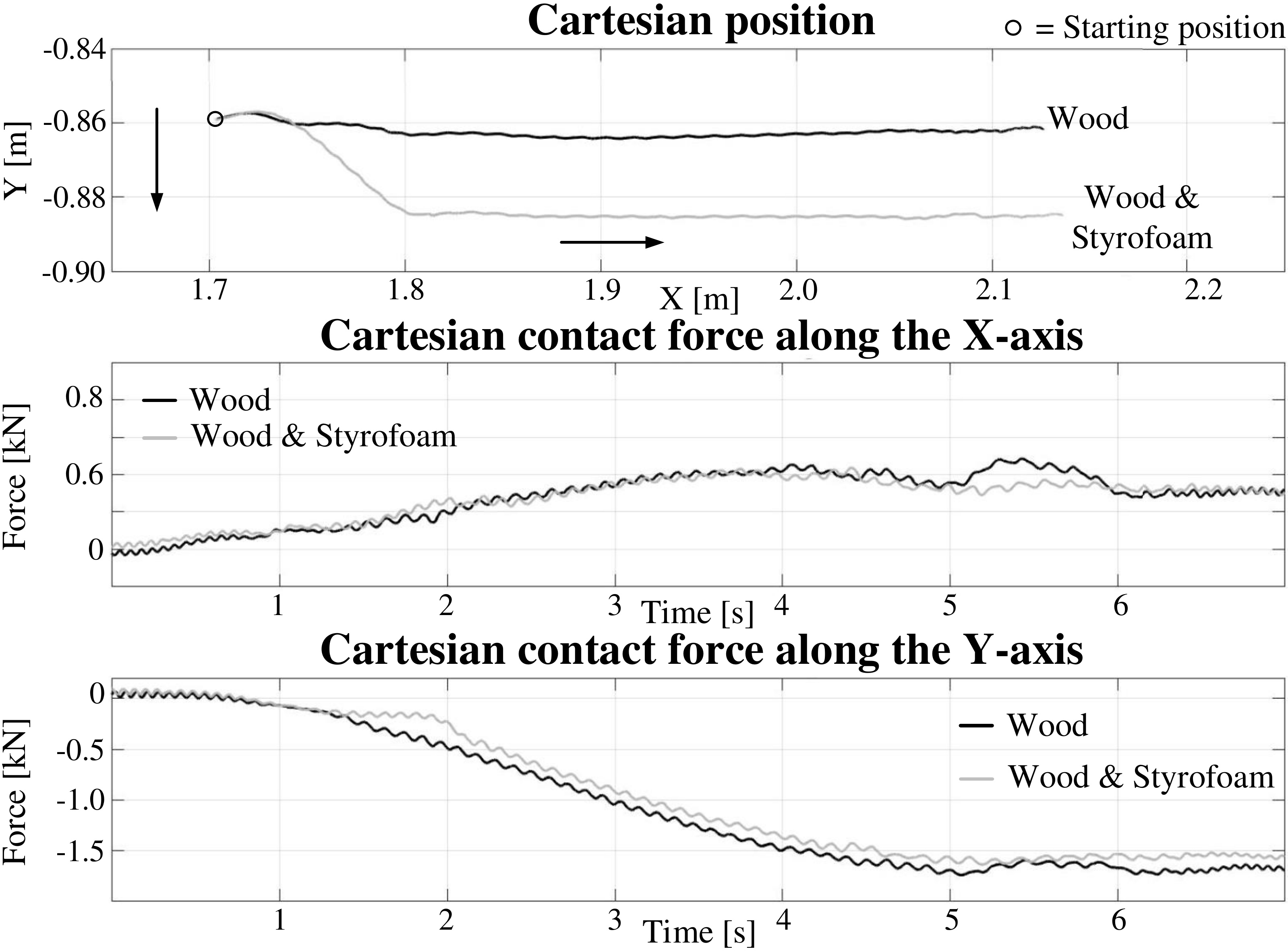}
      \vspace{-0.6cm}
			\caption{\small{The experimental results with the proposed learning.}}
      \label{fig:experiment_data}}}
			\vspace{-0.15cm}
\end{figure}


\section{CONCLUSIONS AND FUTURE WORK}
\label{CONCLUSION}
We presented, to the authors' knowledge, the first results on how LfD for in-contact tasks can be performed on a hydraulic manipulator. We showed how to overcome three significant obstacles restricting LfD implementation to hydraulics, namely how 1) to perform force-reflected bilateral teleoperation on a hydraulic manipulator while estimating the contact forces between tool and environment, 2) to learn from teleoperated demonstrations, which are noisier than demonstrations gathered with kinesthetic teaching, and 3) to reproduce the learned motions with a stability-guaranteed impedance controller for a hydraulic manipulator. 

Even though the test setup is simple, it demonstrates well the strength of the work. With the force-reflected bilateral teleoperation, even a non-expert can perform demonstrations without breaking the manipulator or the environment. Demonstrations of different lengths, which cannot be avoided on many real-world hydraulic applications, are not an issue due to the learning method combining them through intersections. Our controller can keep the applied force similar even though the characteristics of the setup vary, making the use of such heavy-duty machines safer. The downside of the learning algorithm in this paper is that it cannot learn nonlinear free-space motions; however, often in heavy-duty manipulation tasks such motions are not required. 

This paper adjusts and combines existing methods to demonstrate that hydraulic manipulators can be efficiently taught to autonomously perform motions requiring contact with the environment in the presence of positional uncertainty and varying environment conditions. Hydraulic heavy-duty machines are common in tasks which require heavy lifting, but the current LfD-research is concentrating mainly on electric manipulators and physically light tasks. On the other hand, research regarding hydraulics is often concentrating on the traditional control problems and not considering whether learning could be used to further facilitate the usage of hydraulic heavy-duty machines. We believe that this paper is an important milestone in bridging the gap between these two faculties and widening the view of both hydraulics and learning communities, as well as showing hydraulic industry that learning methods can be applied to heavy-duty machines.

Our future work consists of generalizing the method to a 6-DOF hydraulic manipulator and possibly another kind of hydraulic machine, such as an excavator. In addition, we plan to adapt the method from \cite{hagos2018} to work with hydraulic manipulators to perform whole tasks required in, for example, earthmoving or heavy maintenance.

\bibliographystyle{ieeetr}
\bibliography{biblio}

\appendices

\section{The Proof for Theorem \ref{thm:impedance}}
\label{proof:impedance}
Substituting \eqref{EQ_imp2} into \eqref{EQ_chi_r} and using $\pmb{\Lambda}_{f} = \mathbf{D}^{-1}_{\rm d}$ and $\pmb{\Lambda}_{\chi} = \mathbf{D}^{-1}_{\rm d}\mathbf{K}_{\rm d}$ in Condition~\ref{cond:gain2}, it yields
\begin{align}
	\dot{\pmb{\chi}}_{\rm r} &= \dot{\pmb{\chi}}_{\rm d} + \pmb{\Lambda}_{\chi}(\pmb{\chi}_{\rm d} - \pmb{\chi}) - \pmb{\Lambda}_f[\mathbf{D}_{\rm d}(\dot{\pmb{\chi}}_{\rm d} - \dot{\pmb{\chi}}) + \mathbf{K}_{\rm d}({\pmb{\chi}}_{\rm d} - {\pmb{\chi}})] \nonumber \\
														&= \dot{\pmb{\chi}}_{\rm d} + \mathbf{D}^{-1}_{\rm d}\mathbf{K}_{\rm d}(\pmb{\chi}_{\rm d} - \pmb{\chi}) - \mathbf{D}^{-1}_{\rm d}\mathbf{D}_{\rm d}(\dot{\pmb{\chi}}_{\rm d} - \dot{\pmb{\chi}}) \nonumber \\
														&\hspace{0.4cm}- \mathbf{D}^{-1}_{\rm d}\mathbf{K}_{\rm d}({\pmb{\chi}}_{\rm d} - {\pmb{\chi}}) \nonumber \\
														&= \dot{\pmb{\chi}}. \label{EQ_App3}
\end{align}

Then, using \eqref{EQ_chi_r}, \eqref{EQ_App3} and Condition~\ref{cond:gain2} yields
\begin{flalign}
  &\hspace{0.5cm}\dot{\pmb{\chi}}_{\rm r} = \dot{\pmb{\chi}}_{\rm d} + \pmb{\Lambda}_{\chi}(\pmb{\chi}_{\rm d} - \pmb{\chi}) + \pmb{\Lambda}_f({}^{\mathbf G}{\pmb{f}}_{\rm d} - {}^{\mathbf G}{\pmb{f}})& \nonumber\\
	&\hspace{0.5cm}\Leftrightarrow {}^{\mathbf G}{\pmb{f}}_{\rm d} - {}^{\mathbf G}{\pmb{f}} = -\pmb{\Lambda}^{-1}_f(\dot{\pmb{\chi}}_{\rm d} - \dot{\pmb{\chi}}_{\rm r}) - \pmb{\Lambda}^{-1}_f\pmb{\Lambda}_{\chi}(\pmb{\chi}_{\rm d} - \pmb{\chi})& \nonumber\\
	&\hspace{0.5cm}\Leftrightarrow {}^{\mathbf G}{\pmb{f}}_{\rm d} - {}^{\mathbf G}{\pmb{f}}= -\mathbf{D}_{\rm d}(\dot{\pmb{\chi}}_{\rm d} - \dot{\pmb{\chi}}) - \mathbf{D}_{\rm d}\mathbf{D}^{-1}_{\rm d}\mathbf{K}_{\rm d}(\pmb{\chi}_{\rm d} - \pmb{\chi})& \nonumber\\
	&\hspace{0.5cm}\Leftrightarrow {}^{\mathbf G}{\pmb{f}}_{\rm d} - {}^{\mathbf G}{\pmb{f}}= -\mathbf{D}_{\rm d}(\dot{\pmb{\chi}}_{\rm d} - \dot{\pmb{\chi}}) - \mathbf{K}_{\rm d}(\pmb{\chi}_{\rm d} - \pmb{\chi}).& \label{EQ_App4}
\end{flalign}

In \eqref{EQ_App4}, the first row equals to \eqref{EQ_chi_r}, whereas the last row equals to \eqref{EQ_imp2}. This completes the proof for Theorem~\ref{thm:impedance}. \hfill $\blacksquare$

\end{document}